\providecommand{\tabularnewline}{\\}
\theoremstyle{plain}
\newtheorem{thm}{\protect\theoremname}
\theoremstyle{definition}
\newtheorem{defn}[thm]{\protect\definitionname}
\theoremstyle{plain}
\newtheorem{prop}[thm]{\protect\propositionname}
\newenvironment{proof}[1][\protect\proofname]{\par
	\normalfont\topsep6\p@\@plus6\p@\relax
	\trivlist
	\itemindent\parindent
	\item[\hskip\labelsep\scshape #1]\ignorespaces
}{%
	\endtrivlist\@endpefalse
}
\providecommand{\proofname}{Proof}
\icmltitlerunning{Beyond Folklore: A Scaling Calculus for the Design and Initialization of ReLU Networks}
\newcommand{\xmark}{\ding{55}}%
\providecommand{\definitionname}{Definition}
\providecommand{\propositionname}{Proposition}
\providecommand{\theoremname}{Theorem}
\begin{document}
\twocolumn[
\icmltitle{Beyond Folklore: A Scaling Calculus for the Design and Initialization of ReLU Networks}

\begin{icmlauthorlist}
\icmlauthor{Aaron Defazio}{equal,fb}
\icmlauthor{L{\'e}on Bottou}{fb}
\end{icmlauthorlist}

\icmlaffiliation{fb}{Facebook AI Research New York}
\icmlcorrespondingauthor{Aaron Defazio}{firstname.lastname@gmail.com}
\vskip 0.3in
]

\printAffiliationsAndNotice{}
\begin{abstract}
We propose a system for calculating a ``scaling constant'' for layers
and weights of neural networks. We relate this scaling constant to
two important quantities that relate to the optimizability of neural
networks, and argue that a network that is ``preconditioned'' via
scaling, in the sense that all weights have the same scaling constant,
will be easier to train. This scaling calculus results in a number
of consequences, among them the fact that the geometric mean of the
fan-in and fan-out, rather than the fan-in, fan-out, or arithmetic
mean, should be used for the initialization of the variance of weights
in a neural network. Our system allows for the off-line design \&
engineering of ReLU neural networks, potentially replacing blind experimentation.
\end{abstract}

\section{Introduction}

The design of neural networks is often considered a black-art, driven
by trial and error rather than foundational principles. This is exemplified
by the success of recent architecture random-search techniques \citep{NAS,Li2019RandomSA},
which take the extreme of applying no human guidance at all. Although
as a field we are far from fully understanding the nature of learning
and generalization in neural networks, this does not mean that we
should proceed blindly. 

In this work, we define a scaling quantity $\gamma_{l}$ for each
layer $l$ that approximates two quantities of interest when considering
the optimization of a neural network: The ratio of the gradient to
the weights, and the average squared singular value of the corresponding
diagonal block of the Hessian for layer $l$. This quantity is easy
to compute from the (non-central) second moments of the forward-propagated
values and the (non-central) second moments of the backward-propagated
gradients. We argue that networks that have constant $\gamma_{l}$
are better conditioned than those that do not, and we analyze how
common layer types affect this quantity. We call networks that obey
this rule \emph{preconditioned} neural networks.

As an example of some of the possible applications of our theory,
we: 
\begin{itemize}
\item Propose a \emph{principled} weight initialization scheme that can
often provide an improvement over existing schemes;
\item Show which common layer types automatically result in well-conditioned
networks;
\item Show how to improve the conditioning of common structures such as
bottlenecked residual blocks by the addition of fixed scaling constants
to the network.
\end{itemize}

\section{Notation}

Consider a neural network mapping $x_{0}$ to $x_{L}$ made up of
$L$ layers. These layers may be individual operations or blocks of
operations. During training, a loss function is computed for each
minibatch of data, and the gradient of the loss is back-propagated
to each layer $l$ and weight of the network. We prefix each quantity
with $\Delta$ to represent the back-propagated gradient of that quantity.
We assume a batch-size of 1 in our calculations, although all conclusions
hold using mini-batches as well.

Each layer's input activations are represented by a tensor $x_{l}:n_{l}\times\rho_{l}\times\rho_{l}$
made up of $n_{l}$ channels, and spatial dimensions $\rho_{l}\times\rho_{l}$,
assumed to be square for simplicity (results can be adapted to the
rectangular case by using $h_{l}w_{l}$ in place of $\rho_{l}$ everywhere). 

\section{A model of ReLU network dynamics}

\label{sec:assumptions}Our scaling calculus requires the use of simple
approximations of the dynamics of neural networks, in the same way
that simplifications are used in physics to make approximate calculations,
such as the assumption of zero-friction or ideal gasses. These assumptions
constitute a model of the behavior of neural networks that allows
for easy calculation of quantities of interest, while still being
representative enough of the real dynamics.

To this end, we will focus in this work on the behavior of networks
at initialization. Furthermore, we will make strong assumptions on
the statistics of forward and backward quantities in the network.
These assumptions include:
\begin{enumerate}
\item The input to layer $l$, denoted $x_{l}$, is a random tensor assumed
to contain i.i.d entries. We represent the element-wise uncentered
2nd moment by $E[x_{l}^{2}]$.
\item The back-propagated gradient of $x_{l}$ is $\Delta x_{l}$ and is
assumed to be uncorrelated with $x_{l}$ and iid. We represent the
uncentered 2nd-moment of $\Delta x_{l}$ by $E[\Delta x_{l}^{2}]$.
\item All weights in the network are initialized i.i.d from a centered,
symmetric distribution.
\item All bias terms are initialized as zero.
\end{enumerate}
Our calculations rely heavily on the uncentered second moments rather
than the variance of weights and gradients. This is a consequence
of the behavior of the ReLU activation, which zeros out entries. The
effect of this zeroing operation is simple when considering uncentered
second moments under a symmetric input distribution, as half of the
entries will be zeroed, resulting in a halving of the uncentered second
moment. In contrast, expressing the same operation in terms of variance
is complicated by the fact that the mean after application of the
ReLU is distribution-dependent. We will refer to the uncentered second
moment just as the ``second moment'' henceforth.

\section{Activation and layer scaling factors}
\begin{defn}
The key quantity in our calculus is the activation scaling factor
$\varsigma_{l}$, of the input activations for a layer $l$, which
we define as:
\begin{equation}
\varsigma_{l}=n_{l}\rho_{l}^{2}E[\Delta x_{l}^{2}]E[x_{l}^{2}].\label{eq:actscale}
\end{equation}
\end{defn}
This quantity arises due to its utility in computing other quantities
of interest in the network, such as the scaling factors for the weights
of convolutional and linear layers. In ReLU networks, many, but not
all operations maintain this quantity in the sense that $\varsigma_{l}=\varsigma_{l+1}$
for a layer $x_{l+1}=F(x_{l})$ with operation $F$, under the assumptions
of Section \ref{sec:assumptions}. Table \ref{tab:scalingtable} contains
a list of common operations and indicates if they maintain scaling.
As an example, consider adding a simple scaling layer of the form
$x_{l+1}=\sqrt{2}x_{l}$ which doubles the second moment during the
forward pass and doubles the backward second moment during back-propagation.
We can see that:
\begin{align*}
\varsigma_{l+1} & =n_{l+1}\rho_{l+1}^{2}E[\Delta x_{l+1}^{2}]E[x_{l+1}^{2}]\\
 & =n_{l}\rho_{l}^{2}\frac{1}{2}E[\Delta x_{l}^{2}]\cdot2E[x_{l}^{2}]=\varsigma_{l}
\end{align*}

Our analysis in our work is focused on ReLU networks primarily due
to the fact that ReLU non-linearities maintain this scaling factor.

\begin{table*}
\noindent \begin{centering}
\caption{\label{tab:scalingtable}Scaling of common layers}
\par\end{centering}
\noindent \centering{}{\small{}}%
\begin{tabular}{|c|c|>{\centering}p{0.55\textwidth}|}
\hline 
{\small{}Method} & {\small{}Maintains Scaling} & {\small{}Notes}\tabularnewline
\hline 
\hline 
{\small{}Linear layer} & {\small{}\checkmark} & \noindent \raggedright{}{\small{}Layer scaling requires geometric
initialization}\tabularnewline
\hline 
{\small{}(Strided) convolution} & {\small{}\checkmark} & {\small{}Requires stride equal to the kernel size}\tabularnewline
\hline 
{\small{}Skip connections} & {\small{}\checkmark} & {\small{}Operations within residual blocks will also be scaled correctly
against other residual blocks, but not against outside operations.}\tabularnewline
\hline 
{\small{}Average pooling} & {\small{}\checkmark} & {\small{}Requires stride equal to the kernel size}\tabularnewline
\hline 
{\small{}Max pooling} & {\small{}\xmark} & \tabularnewline
\hline 
{\small{}Dropout} & {\small{}\checkmark} & \tabularnewline
\hline 
{\small{}ReLU/LeakyReLU} & {\small{}\checkmark} & {\small{}Any positively-homogenous function with degree 1}\tabularnewline
\hline 
{\small{}Sigmoid} & {\small{}\xmark} & \tabularnewline
\hline 
{\small{}Tanhh} & {\small{}\xmark} & {\small{}Maintains scaling if entirely within the linear regime}\tabularnewline
\hline 
\end{tabular}\vspace{-0.5em}
\end{table*}

\begin{defn}
Using the activation scaling factor, we define the layer or weight
scaling factor of a convolutional layer with kernel $k_{l}\times k_{l}$
as:
\begin{equation}
\gamma_{l}=\frac{\varsigma_{l}}{n_{l+1}n_{l}k_{l}^{2}E[W_{l}^{2}]^{2}}.\label{eq:sf-intrinsic}
\end{equation}
\end{defn}
Recall that $n_{l}$ is the fan-in and $n_{l+1}$ is the fan-out of
the layer. This expression also applies to linear layers by taking
$k_{l}=1$. This quantity can also be defined extrinsically without
reference to the weight initialization via the expression:
\[
\gamma_{l}=n_{l}k_{l}^{2}\rho_{l}^{2}E\left[x_{l}^{2}\right]^{2}\frac{E[\Delta x_{l+1}^{2}]}{E[x_{l+1}^{2}]}.
\]
we establish this equivalence under the assumptions of Section \ref{sec:assumptions}
in the appendix.

\section{Motivations for scaling factors}

We can motivate the utility of our scaling factor definition by comparing
it to another simple quantity of interest. For each layer, consider
the ratio of the second moments between the weights, and their gradients:
\[
\nu_{l}\doteq\frac{E[\Delta W_{l}^{2}]}{E[W_{l}^{2}]}.
\]
This ratio approximately captures the relative change that a single
SGD step with unit step-size on $W_{l}$ will produce. We call this
quantity the weight-to-gradient ratio. When $E[\Delta W_{l}^{2}]$
is very small compared to $E[W_{l}^{2}]$, the weights will stay close
to their initial values for longer than when $E[\Delta W_{l}^{2}]$
is large. In contrast, if $E[\Delta W_{l}^{2}]$ is very large compared
to $E[W_{l}^{2}]$, then learning can be expected to be unstable,
as the sign of the elements of $W$ may change rapidly between optimization
steps. A network with constant $\nu_{l}$ is also well-behaved under
weight-decay, as the ratio of weight-decay second moments to gradient
second moments will stay constant throughout the network, keeping
the push-pull of gradients and decay constant across the network.
This ratio also captures a relative notion of exploding or vanishing
gradients. Rather than consider if the gradient is small or large
in absolute value, we consider its relative magnitude instead.
\begin{prop}
The weight to gradient ratio $\nu_{l}$ is equal to the scaling factor
$\gamma_{l}$ under the assumptions of Section \ref{sec:assumptions}
\@.
\end{prop}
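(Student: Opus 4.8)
The plan is to expand both quantities in terms of the second moments $E[x_l^2]$, $E[x_{l+1}^2]$, $E[\Delta x_l^2]$, $E[\Delta x_{l+1}^2]$ and the weight moment $E[W_l^2]$, and to reduce the identity $\nu_l=\gamma_l$ to three elementary ``propagation'' facts that follow from writing the layer as a convolution under the modeling assumptions of Section~\ref{sec:assumptions}. Taking a stride-$1$ convolution for concreteness (so $\rho_{l+1}=\rho_l$; the linear layer is the case $k_l=1$), these facts are
\begin{align*}
E[x_{l+1}^2] &= n_l k_l^2\, E[W_l^2]\, E[x_l^2],\\
E[\Delta x_l^2] &= n_{l+1} k_l^2\, E[W_l^2]\, E[\Delta x_{l+1}^2],\\
E[\Delta W_l^2] &= \rho_l^2\, E[x_l^2]\, E[\Delta x_{l+1}^2].
\end{align*}

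First I would derive the forward and backward identities. Writing $x_{l+1}[o,s]=\sum_{i=1}^{n_l}\sum_{a,b=1}^{k_l}W_l[o,i,a,b]\,x_l[i,s+a,s+b]$ and similarly expressing $\Delta x_l$ via the transposed convolution, I square and take expectations: every off-diagonal term carries a factor $E[W_l[o,i,a,b]\,W_l[o,i',a',b']]$ with $(i,a,b)\neq(i',a',b')$, which vanishes because the weights are i.i.d.\ from a centered, symmetric distribution, while each of the $n_l k_l^2$ (resp.\ $n_{l+1}k_l^2$) diagonal terms factorizes as $E[W_l^2]E[x_l^2]$ (resp.\ $E[W_l^2]E[\Delta x_{l+1}^2]$) using the independence of the weights from the activations and the back-propagated gradients. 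Then for the weight gradient I use that backpropagation through the convolution gives $\Delta W_l[o,i,a,b]=\sum_{s}\Delta x_{l+1}[o,s]\,x_l[i,s+a,s+b]$, a sum over the $\rho_{l+1}^2=\rho_l^2$ output spatial positions; squaring, the $\rho_l^2$ diagonal terms each contribute $E[\Delta x_{l+1}^2]E[x_l^2]$ by the assumed independence of forward and backward quantities, and the $s\neq s'$ cross terms factor as $E[\Delta x_{l+1}[o,s]\Delta x_{l+1}[o,s']]\cdot E[x_l[\cdot]\,x_l[\cdot]]$ and vanish because $\Delta x_{l+1}$ is i.i.d.\ across spatial positions with zero mean --- at initialization $\Delta x_{l+1}$ is a linear image of downstream gradients under the centered, symmetric weight matrices of the layers after $l+1$.

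With the three identities in hand the conclusion is pure algebra. From the intrinsic definition $\gamma_l=\varsigma_l/(n_{l+1}n_lk_l^2E[W_l^2]^2)$ with $\varsigma_l=n_l\rho_l^2E[\Delta x_l^2]E[x_l^2]$ we get
\[
\gamma_l=\frac{\rho_l^2\,E[\Delta x_l^2]\,E[x_l^2]}{n_{l+1}k_l^2\,E[W_l^2]^2},
\]
and substituting the backward identity for $E[\Delta x_l^2]$ collapses this to $\rho_l^2E[\Delta x_{l+1}^2]E[x_l^2]/E[W_l^2]$, which by the weight-gradient identity is exactly $E[\Delta W_l^2]/E[W_l^2]=\nu_l$. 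One could equivalently route through the extrinsic formula for $\gamma_l$ stated above together with the forward identity. For a strided convolution the same argument goes through with the index counts adjusted --- e.g.\ for stride equal to the kernel size the backward identity loses its $k_l^2$ and $\rho_{l+1}=\rho_l/k_l$ --- and the cancellation is unchanged, consistently with Table~\ref{tab:scalingtable}.

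The main obstacle is the vanishing of the spatial cross terms in $E[\Delta W_l^2]$. The forward- and backward-pass cross terms die for a trivial reason (centered weights), but the weight-gradient cross terms require that $\Delta x_{l+1}$ be essentially zero-mean and spatially white, which is not literally true for an arbitrary loss and is exactly the kind of statement that the idealized ``model of ReLU network dynamics'' of Section~\ref{sec:assumptions} is designed to license; making this precise --- or making precise exactly which hypothesis of Section~\ref{sec:assumptions} is being invoked --- is the delicate step. The remaining bookkeeping, tracking powers of $k_l$ and $\rho_l$ versus $\rho_{l+1}$ across strides and padding conventions, is routine but must be done carefully for the factors to cancel.
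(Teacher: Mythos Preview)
Your proof is correct and follows essentially the same route as the paper: isolate the forward, backward, and weight-gradient second-moment identities and reduce $\nu_l=\gamma_l$ to elementary algebra. The paper's version is terser --- it simply asserts $E[\Delta W_l^2]=\rho_l^2\,E[x_l^2]\,E[\Delta x_{l+1}^2]$ and routes through the extrinsic expression for $\gamma_l$ via the forward identity rather than the intrinsic one via the backward identity --- but the substance is identical, and you even flag that alternative route yourself.
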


\subsection{Conditioning of the Hessian}

The scaling factor of a layer $l$ is also closely related to the
singular values of the diagonal block of the Hessian corresponding
to that layer. We derive a correspondence in this section, providing
further justification for our definition of the scaling factor above.
We focus on non-convolutional layers for simplicity in this section,
although the result extends to the convolutional case without issue.

ReLU networks have a particularly simple structure for the Hessian
for any set of activations, as the network's output is a piecewise-linear
function $g$ fed into a final layer consisting of a loss. This structure
results in greatly simplified expressions for diagonal blocks of the
Hessian with respect to the weights, and allows us to derive expressions
involving the singular values of these blocks.

We will consider the output of the network as a composition of two
functions, the current layer $g$, and the remainder of the network
$h$. We write this as a function of the weights, i.e. $f(W_{l})=h(g(W_{l}))$.
The dependence on the input to the network is implicit in this notation,
and the network below layer $l$ does not need to be considered.

Let $R_{l}=\nabla_{x_{l+1}}^{2}h(x_{l+1})$ be the Hessian of $h$,
the remainder of the network after application of layer $l$ (For
a linear layer $x_{l+1}=W_{l}x_{l}$). Let $J_{l}$ be the Jacobian
of $y_{l}$ with respect to $W_{l}$. The Jacobian has shape $J_{l}:n_{l}^{\text{out}}\times\left(n_{l}^{\text{out}}n_{l}^{\text{in}}\right)$.
Given these quantities, the diagonal block of the Hessian corresponding
to $W_{l}$ is equal to:
\[
G_{l}=J_{l}^{T}R_{l}J_{l}.
\]
The \emph{$l$th diagonal block of the (Generalized) Gauss-Newton}
matrix $G$ \citep{martens-insights}. We discuss this decomposition
further in the appendix. 

Assume that the input-output Jacobian $\Phi$ of the remainder of
the network above each block is initialized so that $\left\Vert \Phi\right\Vert _{2}^{2}=O(1)$
with respect to $n_{l+1}$. This assumption just encodes the requirement
that initialization used for the remainder of the network is sensible,
so that the output of the network does not blow-up for large widths.
\begin{prop}
Under the assumptions outlined in Section \ref{sec:assumptions},
for linear layer $l$, the average squared singular value of $G_{l}$
is equal to:

\[
n_{l}E\left[x_{l}^{2}\right]^{2}\frac{E[\Delta x_{l+1}^{2}]}{E[x_{l+1}^{2}]}+O\left(\frac{n_{l}E\left[x_{l}^{2}\right]^{2}}{n_{l+1}E[x_{l+1}^{2}]}\right).
\]
The Big-O term is with respect to $n_{l}$ and $n_{l+1}$; its precise
value depends on properties of the remainder of the network above
the current layer.
\end{prop}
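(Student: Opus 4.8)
The plan is to compute $\frac{1}{n_l n_{l+1}}\mathrm{tr}(G_l^T G_l) = \frac{1}{n_l n_{l+1}}\mathrm{tr}(G_l^2)$ (using symmetry of $G_l$) directly, exploiting the factorization $G_l = J_l^T R_l J_l$. First I would write out the Jacobian $J_l$ of $y_l = W_l x_l$ with respect to the entries of $W_l$: since $(y_l)_i = \sum_j (W_l)_{ij}(x_l)_j$, the Jacobian has the block structure $J_l = I_{n_{l+1}} \otimes x_l^T$ (an $n_{l+1}\times n_{l+1}n_l$ matrix). Hence $G_l = (I\otimes x_l^T)^T R_l (I\otimes x_l^T) = \sum$-type expression, and one gets the clean identity $\mathrm{tr}(G_l^2) = \mathrm{tr}\big((R_l \odot (x_l x_l^T \text{ contracted}))^2\big)$; more concretely $G_l$ acts as $R_l$ tensored against the rank-one matrix $x_l x_l^T$, so $\mathrm{tr}(G_l^2) = \mathrm{tr}(R_l (x_l^T x_l) R_l (x_l^T x_l)) = (x_l^T x_l)^2 \,\mathrm{tr}(R_l^2)$ when the $x_l$-contraction collapses to the scalar $\|x_l\|^2 = \sum_j (x_l)_j^2$.

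Next I would take expectations under the Section~\ref{sec:assumptions} model. By the i.i.d.\ assumption on $x_l$, $E[\|x_l\|^2] = n_l E[x_l^2]$, and because the relevant quantity enters squared and the off-diagonal cross terms vanish in expectation (independence, centered), $E[(x_l^T x_l)^2]$ concentrates to $(n_l E[x_l^2])^2 = n_l^2 E[x_l^2]^2$ up to lower-order corrections in $n_l$. Dividing by $n_l n_{l+1}$ then leaves $\frac{n_l E[x_l^2]^2}{n_{l+1}} \, \mathrm{tr}(R_l^2)$. The remaining task is to identify $\mathrm{tr}(R_l^2)$. Here $R_l = \nabla^2_{x_{l+1}} h$ is the Hessian of the rest of the network; using the Gauss--Newton/piecewise-linear structure, $R_l = \Phi^T (\nabla^2 \ell)\, \Phi$ where $\Phi$ is the input--output Jacobian of the remainder, so $\mathrm{tr}(R_l^2)$ is governed by $\|\Phi\|_2$ and the loss curvature. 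Relating $\mathrm{tr}(R_l^2)$ to the backward second moments requires the observation that the back-propagated gradient satisfies $\Delta x_{l+1} = \Phi^T(\nabla \ell)$, so $E[\Delta x_{l+1}^2] \approx \frac{1}{n_{l+1}}\|\Phi^T \nabla\ell\|^2$, and similarly $E[x_{l+1}^2]$ controls the scale against which curvature is measured; the ratio $\mathrm{tr}(R_l^2) \approx n_{l+1} \frac{E[\Delta x_{l+1}^2]}{E[x_{l+1}^2]}$ emerges after normalizing, with the $\|\Phi\|_2^2 = O(1)$ assumption ensuring the discarded pieces are $O(1/n_{l+1})$ relative to the main term — which, after multiplying back by $\frac{n_l E[x_l^2]^2}{n_{l+1}}$, produces exactly the stated Big-O remainder $O\!\big(\frac{n_l E[x_l^2]^2}{n_{l+1} E[x_{l+1}^2]}\big)$.

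The main obstacle I anticipate is the step connecting $\mathrm{tr}(R_l^2)$ to the scalar ratio $n_{l+1} E[\Delta x_{l+1}^2]/E[x_{l+1}^2]$ in a way that is both honest about what ``average'' means and correctly tracks which terms are leading order in $n_{l+1}$ versus subleading. This requires care: $R_l$ is not a scalar multiple of the identity, so replacing $\mathrm{tr}(R_l^2)$ by $n_{l+1}$ times a squared ``typical singular value'' is precisely where an approximation is being made, and the $O(\cdot)$ term must absorb the gap between $\mathrm{tr}(R_l^2)$ and $\frac{1}{n_{l+1}}(\mathrm{tr} R_l)^2$ (a variance-of-spectrum term) as well as the concentration error in $E[(x_l^Tx_l)^2]$. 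I would handle this by invoking the piecewise-linearity of the ReLU network — so that $h$ is linear on the relevant cell and $R_l = \Phi^T \nabla^2\ell\, \Phi$ exactly, with $\Phi$ deterministic given the activation pattern — reducing everything to bounding spectral quantities of $\Phi$ under the stated $\|\Phi\|_2^2 = O(1)$ hypothesis, and defer the detailed bookkeeping to the appendix, mirroring how the excerpt defers the intrinsic/extrinsic equivalence of $\gamma_l$.
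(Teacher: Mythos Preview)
Your overall strategy---computing the average squared singular value as $\frac{1}{n_l n_{l+1}}\mathrm{tr}(G_l^2)$ and using the Kronecker structure $J_l = I_{n_{l+1}}\otimes x_l^T$ to factor out $(x_l^Tx_l)^2$---is exactly equivalent to the paper's approach of propagating a random test vector $r$ through $J_l^T R_l J_l$; the paper even writes $J_l$ as the same Kronecker product. So up to the handling of $R_l$, you and the paper are doing the same computation.

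The gap you identify is real, but your proposed fix is aimed in the wrong direction. You frame the difficulty as controlling the spread of the spectrum of $R_l$ (the discrepancy between $\mathrm{tr}(R_l^2)$ and $\tfrac{1}{n_{l+1}}(\mathrm{tr}\,R_l)^2$), and hope the $O(\cdot)$ term swallows it. That is not how the paper closes the gap, and it would be hard to make rigorous without further structural assumptions on $\Phi$. The paper's move is simpler and more direct: rather than bounding $\mathrm{tr}(R_l^2)$ spectrally, it \emph{substitutes the forward activation $x_{l+1}$ itself for the random test direction}. Under the Section~\ref{sec:assumptions} model, both $J_l r$ and $x_{l+1}=W_l x_l$ have covariance proportional to the identity and are uncorrelated with $R_l$, so $E[(R_l u)^2]/E[u^2]$ is the same for either choice. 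But with $u=x_{l+1}$ and the least-squares structure $R_l=\Phi^T\Phi$, $\Delta x_{l+1}=\Phi^T(\Phi x_{l+1}-t)$, one has $R_l x_{l+1}=\Delta x_{l+1}+\Phi^T t$, giving
\[
E[(R_l x_{l+1})^2]=E[\Delta x_{l+1}^2]+O\!\left(\tfrac{1}{n_{l+1}}\right),
\]
where the $O(\cdot)$ is precisely the $\tfrac{1}{n_{l+1}}\|\Phi^T t\|^2$ term controlled by $\|\Phi\|_2^2=O(1)$. Dividing by $E[x_{l+1}^2]$ yields $\mathrm{tr}(R_l^2)/n_{l+1}$ on the left, and the claimed ratio plus remainder on the right. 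This sidesteps any need to reason about the full spectrum of $R_l$: the connection to $\Delta x_{l+1}$ is exact up to the single target-dependent correction, not a ``typical singular value'' approximation.
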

\begin{figure*}
\begin{centering}
\includegraphics[width=1\textwidth]{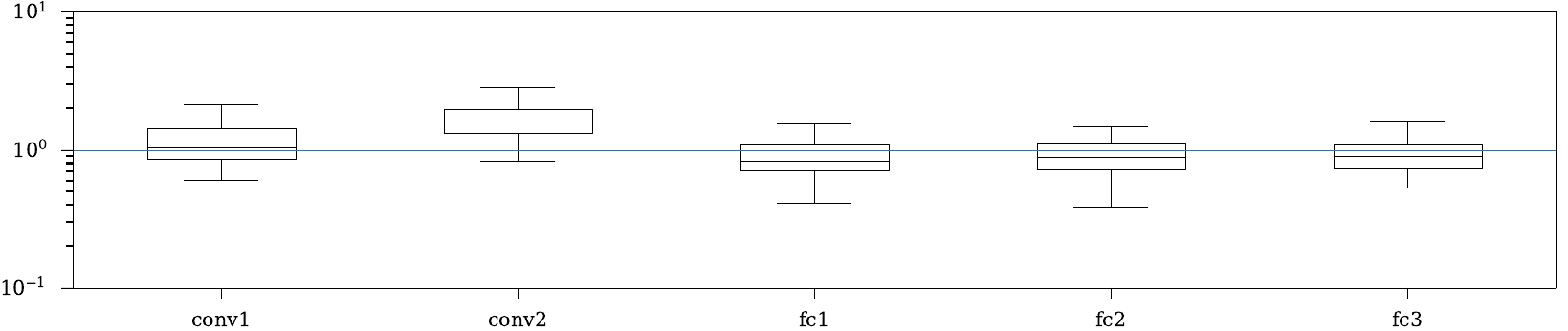}
\par\end{centering}
\caption{\label{fig:ratio-lenet}Distributions of the ratio of theoretical
scaling to actual for a strided LeNet network. The ratios are close
to the ideal value of 1, indicating good theoretical and practical
agreement.}
\end{figure*}
Despite the approximations required for its derivation, the scaling
factor can still be close to the actual average squared singular value.
We computed the ratio of the scaling factor (Equation \ref{eq:sf-intrinsic})
to the actual expectation $E[\left(G_{l}r\right)^{2}]$ for a strided
(rather than max-pooled, see Table \ref{tab:scalingtable}) LeNet
model, where we use random input data and a random loss (i.e. for
outputs $y$ we use $y^{T}Ry$ for an i.i.d normal matrix $R$), with
batch-size 1024, and $32\times32$ input images. The results are shown
in Figure \ref{fig:ratio-lenet} for 100 sampled setups; there is
generally good agreement with the theoretical expectation.

\section{Initialization of ReLU networks}

\label{sec:precond-geom}An immediate consequence of our definition
of the scaling factor is a rule for the initialization of ReLU networks.
Consider a network where the activation scaling factor is constant
through-out. Then any two layers $l$ and $r$ will have the same
weight scaling factor if $\gamma_{l}=\gamma_{r}$, which holds immediately
when each layer is initialized with: 
\begin{equation}
E[W_{l}^{2}]=\frac{c}{k_{l}\sqrt{n_{l}n_{l+1}}},\label{eq:geom-init}
\end{equation}
for some fixed constant $c$ independent of the layer. Initialization
using the geometric-mean of the fan-in and fan-out ensures a constant
layer scaling factor throughout the network, aiding optimization.
Notice that the dependence on the kernel size is also unusual, rather
than $k_{l}^{2}$, we normalize by $k_{l}$. 

\subsection{Other Initialization schemes}

The most common approaches are the \emph{Kaiming} \citep{kaiming-rectifiers2015}
(sometimes called He) and \emph{Xavier} \citep{xavierglorot2010}
(sometimes called Glorot) initializations. The Kaiming technique for
ReLU networks is one of two approaches:
\begin{equation}
(\text{fan-in)}\quad\text{Var}[W_{l}]=\frac{2}{n_{l}k_{l}^{2}}\;\text{or}\label{eq:fan-in}
\end{equation}
\[
(\text{fan-out)}\quad\text{Var}[W_{l}]=\frac{2}{n_{l+1}k_{l}^{2}}
\]
For the feed-forward network above, assuming random activations, the
forward-activation variance will remain constant in expectation throughout
the network if fan-in initialization of weights \citep{efficientbackprop2012}
is used, whereas the fan-out variant maintains a constant variance
of the back-propagated signal. The constant factor 2 corrects for
the variance-reducing effect of the ReLU activation. Although popularized
by \citet{kaiming-rectifiers2015}, similar scaling was in use in
early neural network models that used tanh activation functions \citep{bottou-88b}. 

These two principles are clearly in conflict; unless $n_{l}=n_{l+1}$,
either the forward variance or backward variance will become non-constant.
No \emph{prima facie} reason for preferring one initialization over
the other is provided. Unfortunately, there is some confusion in the
literature as many works reference using Kaiming initialization without
specifying if the fan-in or fan-out variant is used.

The Xavier initialization \citep{xavierglorot2010} is the closest
to our proposed approach. They balance these conflicting objectives
using the arithmetic mean:
\begin{equation}
\text{Var}[W_{l}]=\frac{2}{\frac{1}{2}\left(n_{l}+n_{l+1}\right)k_{l}^{2}},\label{eq:xavier}
\end{equation}
to ``... approximately satisfy our objectives of maintaining activation
variances and back-propagated gradients variance as one moves up or
down the network''. This approach to balancing is essentially heuristic,
in contrast to the geometric mean approach that our theory directly
guides us to.

Figure \ref{fig:lenet_heatmaps} shows heat maps of the average singular
values for each block of the Hessian of a LeNet model under the initializations
considered. The use of geometric initialization results in an equally
weighted diagonal, in contrast to the other initializations considered.

\begin{figure*}
\hfill{}\subfloat[Fan-in]{\includegraphics[scale=1.25]{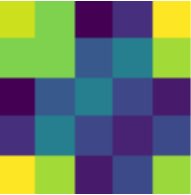}

}\hfill{}\subfloat[Fan-out]{\includegraphics[scale=1.25]{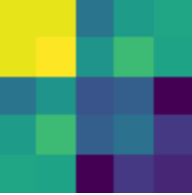}

}\hfill{}\subfloat[Arithmetic mean]{\includegraphics[scale=1.25]{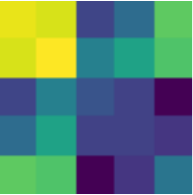}

}\hfill{}\subfloat[Geometric mean]{\includegraphics[scale=1.25]{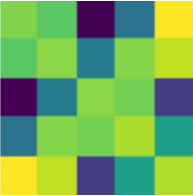}

}\hfill{}

\caption{\label{fig:lenet_heatmaps}Average singular value heat maps for the
strided LeNet model, where each square represents a block of the Hessian,
with blocking at the level of weight matrices (biases omitted). Using
geometric initialization maintains an approximately constant block-diagonal
weight. The scale goes from Yellow (larger) through green to blue
(smaller).}
\end{figure*}

\subsection{Practical application}

\label{subsec:dep_kernel}The dependence of the geometric initialization
on kernel size rather than its square will result in a large increase
in forward second moments if $c$ is not carefully chosen. We recommend
setting $c=2/k$, where $k$ is the typical kernel size in the network.
Any other layer in the network with kernel size differing from this
default should be preceded by a fixed scaling factor $x_{l+1}=\alpha x_{l}$,
that corrects for this. For instance, if the typical kernel size is
1, then a 3x3 convolution would be preceded with a $\alpha=\sqrt{1/3}$
fixed scaling factor.

In general, we have the freedom to modify the initialization of a
layer, then apply a fixed multiplier before or after the layer to
``undo'' the increase. This allows us to change the behavior of
a layer during learning by modifying the network rather than modifying
the optimizer. Potentially, we can avoid the need for sophisticated
adaptive optimizers by designing networks to be easily optimizable
in the first place. In a sense, the need for maintaining the forward
or backward variance that motivates that fan-in/fan-out initialization
can be decoupled from the choice of initialization, allowing us to
choose the initialization to improve the optimizability of the network.

Initialization by the principle of dynamical isometry \citep{nonlineardyn2014,dynamicalisometry2018},
a form of orthogonal initialization \citep{mishkin2016need} has been
shown to allow for the training of very deep networks. Such orthogonal
initializations can be combined with the scaling in our theory without
issue, by ensuring the input-output second moment scaling is equal
to the scaling required by our theory. Our analysis is concerned with
the correct initialization when layer widths change within a network,
with is a separate concern from the behavior of a network in a large-depth
limit, where all layers are typically taken to be the same width.
In ReLU networks orthogonal initialization is less interesting, as
``... the ReLU nonlinearity destroys the qualitative scaling advantage
that linear networks possess for orthogonal weights versus Gaussian''
\citet{spectraluni2018}.

\section{Output second moments}

\label{subsec:output_norm}A neural network's behavior is also very
sensitive to the second moment of the outputs. We are not aware of
any existing theory guiding the choice of output variance at initialization
for the case of log-softmax losses, where it has a non-trivial effect
on the back-propagated signals, although output variances of 0.01
to 0.1 are reasonable choices to avoid saturating the non-linearity
while not being too close to zero. The output variance should \textbf{\small{}always}
be checked and potentially corrected when switching initialization
schemes, to avoid inadvertently large or small values. 

In general, the variance at the last layer may easily be modified
by inserting a fixed scalar multiplier $x_{l+1}=\alpha x_{l}$ anywhere
in the network, and so we have complete control over this variance
independently of the initialization used. For a simple ReLU convolutional
network with all kernel sizes the same, and without pooling layers
we can compute the output second moment when using geometric-mean
initialization ($c=2/k$) with the expression:
\begin{align}
E[x_{l+1}^{2}] & =\frac{1}{2}k_{l}^{2}n_{l}E[W_{l}^{2}]E[x_{l}^{2}]=\sqrt{\frac{n_{l}}{n_{l+1}}}E[x_{l}^{2}].\label{eq:n_change}
\end{align}
The application of a sequence of these layers gives a telescoping
product:
\begin{align*}
E[x_{L}^{2}] & =\left(\prod_{l=0}^{L-1}\sqrt{\frac{n_{l}}{n_{l+1}}}\right)E[x_{0}^{2}]=\sqrt{\frac{n_{0}}{n_{L}}}E[x_{0}^{2}].
\end{align*}
so the output variance is independent of the interior structure of
the network and depends only on the input and output channel sizes. 

\section{Biases}

The conditioning of the additive biases in a network is also crucial
for learning. Since our model requires that biases be initialized
to zero, we can not use the gradient to weight ratio for capturing
the conditioning of the biases in the network. The average singular
value notion of conditioning still applies, which leads to the following
definition:
\begin{defn}
The scaling of the bias of a layer $l$, $x_{l+1}=C_{l}(x_{l})+b_{l}$
is defined as:
\begin{equation}
\gamma_{l}^{b}=\rho^{2}\frac{E[\Delta x_{l}^{2}]}{E[x_{l}^{2}]}.\label{eq:bias_scaling}
\end{equation}
In terms of the activation scaling this is:
\begin{align}
\gamma_{l}^{b} & =\rho^{2}\frac{E[\Delta x_{l}^{2}]}{E[x_{l}^{2}]}\nonumber \\
 & =\frac{\varsigma_{l}}{nE[\Delta x_{l}^{2}]E[x_{l}^{2}]}\frac{E[\Delta x_{l}^{2}]}{E[x_{l}^{2}]}\nonumber \\
 & =\frac{\varsigma_{l}}{n_{l}E[x_{l}^{2}]^{2}}.\label{eq:bias_scale_ex}
\end{align}
From Equation \ref{eq:n_change} it's clear that when geometric initialization
is used with $c=2/k$, then:
\[
n_{l+1}E[x_{l+1}^{2}]^{2}=n_{l}E[x_{l}^{2}]^{2},
\]
and so all bias terms will be equally scaled against each other. If
kernel sizes vary in the ReLU network, then a setting of $c$ following
Section \ref{subsec:dep_kernel} should be used, combined with fixed
scalar multipliers that ensure that at initialization $E[x_{l+1}^{2}]=\sqrt{\frac{n_{l}}{n_{l+1}}}E[x_{l}^{2}]$.
\end{defn}

\subsection{Network input scaling balances weights against biases}

It is traditional to normalize a dataset before applying a neural
network so that the input vector has mean 0 and variance 1 in expectation.
This scaling originated when neural networks commonly used sigmoid
and tanh nonlinearities, which depended heavily on the input scaling.
This principle is no longer questioned today, even though there is
no longer a good justification for its use in modern ReLU based networks.
In contrast, our theory provides direct guidance for the choice of
input scaling. 

Consider the scaling factors for the bias and weight parameters in
the first layer of a ReLU-based network, as considered in previous
sections. We assume the data is already centered. Then the scaling
factors for the weight and bias layers are:
\[
\gamma_{0}=n_{0}k_{0}^{2}\rho_{1}^{2}E\left[x_{0}^{2}\right]^{2}\frac{E[\Delta y_{0}^{2}]}{E[y_{0}^{2}]},\qquad\gamma_{0b}=\rho_{1}^{2}\frac{E[\Delta y_{0}^{2}]}{E[y_{0}^{2}]}.
\]
We can cancel terms to find the value of $E\left[x_{0}^{2}\right]$
that makes these two quantities equal:
\[
E\left[x_{0}^{2}\right]=\frac{1}{\sqrt{n_{0}k_{0}^{2}}}.
\]
In common computer vision architectures, the input planes are the
3 color channels and the kernel size is $k=3$, giving $E\left[x_{0}^{2}\right]\approx0.2$.
Using the traditional variance-one normalization will result in the
effective learning rate for the bias terms being lower than that of
the weight terms. This will result in potentially slower learning
of the bias terms than for the input scaling we propose. We recommend
including an initial forward scaling factor in the network of $1/(n_{0}k^{2})^{1/4}$
to correct for this.

\section{Experimental Results on 26 LIBSVM datasets}

\begin{table*}
\caption{\label{tab:comp}Comparison on 26 LIBSVM repository datasets\vspace{-1em}
}

\noindent \centering{}%
\begin{tabular}{|>{\raggedright}p{0.2\textwidth}|>{\centering}p{0.35\textwidth}|>{\centering}p{0.15\textwidth}|>{\centering}p{0.15\textwidth}|}
\hline 
Method & Average Normalized loss $(\pm0.01)$ & Worst in \# & Best in \#\tabularnewline
\hline 
\hline 
Arithmetic mean & 0.90 & 14 & 3\tabularnewline
\hline 
Fan-in & 0.84 & 3 & 5\tabularnewline
\hline 
Fan-out & 0.88 & 9 & \textbf{12}\tabularnewline
\hline 
Geometric mean & \textbf{0.81} & \textbf{0} & 6\tabularnewline
\hline 
\end{tabular}
\end{table*}
We considered a selection of dense and moderate-sparsity multi-class
classification datasets from the LibSVM repository, 26 in total, collated
from a variety of sources \citep{Dua:2019,smallnorb,statlog,usps,mnist,news20,svmguide,SVHN,aloi,sensor,protein,cifar10}.
The same model was used for all datasets, a non-convolutional ReLU
network with 3 weight layers total. The inner-two layer widths were
fixed at 384 and 64 nodes respectively. These numbers were chosen
to result in a larger gap between the optimization methods, less difference
could be expected if a more typical $2\times$ gap was used. Our results
are otherwise generally robust to the choice of layer widths.

For every dataset, learning rate, and initialization combination we
ran 10 seeds and picked the median loss after 5 epochs as the focus
of our study (The largest differences can be expected early in training).
Learning rates in the range $2^{1}$ to $2^{-12}$ (in powers of 2)
were checked for each dataset and initialization combination, with
the best learning rate chosen in each case based on the median of
the 10 seeds. Training loss was used as the basis of our comparison
as we care primarily about convergence rate, and are comparing identical
network architectures. Some additional details concerning the experimental
setup and which datasets were used are available in the appendix.

Table \ref{tab:scalingtable} shows that \textbf{geometric initialization
is the most consistent} of the initialization approaches considered.\textbf{
It has the lowest loss}, after normalizing each dataset, and it is
never the worst of the 4 methods on any dataset. Interestingly, the
fan-out method is most often the best method, but consideration of
the per-dataset plots (Appendix \ref{sec:forward-backward}) shows
that it often completely fails to learn for some problems, which pulls
up its average loss and results in it being the worst for 9 of the
datasets.

\section{Convolutional case: AlexNet experiments}

To provide a clear idea of the effect of our scaling approach on larger
networks we used the AlexNet architecture \citep{krizhevsky2012imagenet}
as a test bench. This architecture has a large variety of filter sizes
(11, 5, 3, linear), which according to our theory will affect the
conditioning adversely, and which should highlight the differences
between the methods. The network was modified to replace max-pooling
with striding as max-pooling is not well-scaled by our theory.
\begin{figure}
\centering{}\includegraphics[width=1\columnwidth]{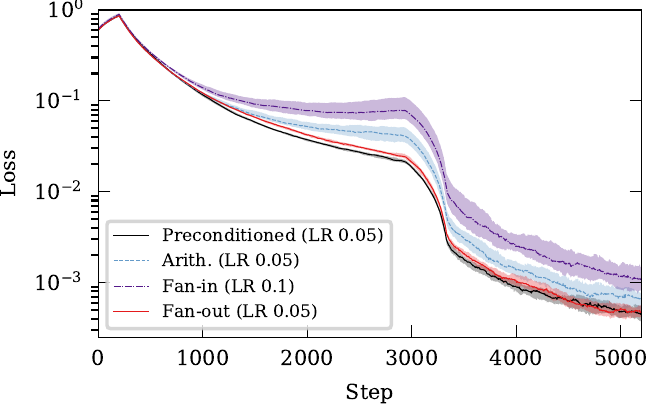}\caption{\label{fig:CIFAR10}CIFAR-10 training loss for a strided AlexNet architecture.
The median as well as a 25\%-75\% IQR of 40 seeds is shown for each
initialization, where for each seed a sliding window of minibatch
training loss over 400 steps is used.}
\end{figure}

\begin{flushleft}
Following Section \ref{subsec:output_norm}, we normalize the output
of the network at initialization by running a single batch through
the network and adding a fixed scaling factor to the network to produce
output standard deviation $0.05$. We tested on CIFAR-10 following
the standard practice as closely as possible, as detailed in the appendix.
We performed a geometric learning rate sweep over a power-of-two grid.
Results are shown in Figure \ref{fig:CIFAR10} for an average of 40
seeds for each initialization. Preconditioning is a statistically
significant improvement ($p=3.9\times10^{-6})$ over arithmetic mean
initialization and fan-in initialization, however, it only shows an
advantage over fan-out at mid-iterations. 
\par\end{flushleft}

\section{Case study: Unnormalized residual networks}

In the case of more complex network architectures, some care needs
to be taken to produce well-scaled neural networks. We consider in
this section the example of a residual network, a common architecture
in modern machine learning. Consider a simplified residual architecture
like the following, where we have omitted ReLU operations for our
initial discussion:
\begin{align*}
x_{1} & =C_{0}(x_{0}),\\
x_{2} & =B_{0}(x_{1},\alpha_{0},\beta_{0}),\\
x_{3} & =B_{1}(x_{2},\alpha_{1},\beta_{1}),\\
x_{4} & =AvgPool(x_{3}),\\
x_{5} & =L(x_{4}).
\end{align*}
where for some sequence of operations $F$:
\[
B(x,\alpha,\beta)=\alpha x+\beta F(x),
\]
we further assume that $\alpha^{2}+\beta^{2}=1$ and that $E[F(x)^{2}]=E[x^{2}]$
following \citet{rescalenet}. The use of weighted residual blocks
is necessary for networks that do not use batch normalization \citep{fixup,inceptionv4,boris2018,effectinitarch2018}.

If geometric initialization is used, then $C_{0}$ and $L$ will have
the same scaling, however, the operations within the residual blocks
will not. To see this, we can calculate the activation scaling factor
within the residual block. We define the shortcut branch for the residual
block as the $\alpha x$ operation and the main branch as the $C(x)$
operation. Let $x_{R}=\beta C(x)$ and $x_{S}=\alpha x$, and define
$y=x_{S}+x_{R}$. 

Let $\varsigma$ be the scaling factor at $x$
\[
\varsigma=n\rho^{2}E[\Delta x^{2}]E[x^{2}],
\]
We will use the fact that:
\begin{align*}
E[\Delta x^{2}] & =\left(\alpha^{2}+\beta^{2}\right)E[\Delta y^{2}]\\
 & =E[\Delta y^{2}].
\end{align*}
From rewriting the scale factor for $x_{R}$, we see that:
\begin{align*}
\varsigma_{R} & =n\rho^{2}E[\Delta x_{R}^{2}]E[x_{R}^{2}]\\
 & =n\rho^{2}E[\Delta x^{2}]E[x_{R}^{2}]\\
 & =\beta^{2}n_{l}\rho_{l}^{2}E[\Delta x^{2}]E[x^{2}]\\
 & =\beta^{2}\varsigma.
\end{align*}
A similar calculation shows that the residual branch's scaling factor
is multiplied by $\alpha^{2}$. To ensure that convolutions within
the main branch of the residual block have the same scaling as those
outside the block, we must multiply their initialization by a factor
$c$. We can calculate the value of $c$ required when geometric scaling
is used for an operation in layer $l$ in the main branch:
\begin{align*}
\gamma_{l} & =\frac{\varsigma_{R}}{n_{l+1}n_{l}k_{l}^{2}E[W_{l}^{2}]^{2}}\\
 & =\frac{\varsigma_{R}k_{l}^{2}n_{l+1}n_{l}}{n_{l+1}n_{l}k_{l}^{2}c_{l}^{2}}=\varsigma_{R}/c_{l}^{2}
\end{align*}
For $\gamma_{l}$ to match $\gamma$ outside the block we thus need
$\gamma_{l}=\varsigma_{R}/a_{l}^{2}=(\beta^{2}/c^{2})\varsigma_{R},$
i.e. $c=\beta$. If the residual branch uses convolutions (such as
for channel widening operations or down-sampling as in a ResNet-50
architecture) then they should be scaled by $\alpha$. Modifying the
initialization of the operations within the block changes $E[F(x)^{2}],$
so a fixed scalar multiplier must be introduced within the main branch
to undo the change, ensuring $E[F(x)^{2}]=E[x^{2}]$.

\subsection{Design of a pre-activation ResNet block}

Using the principle above we can modify the structure of a standard
pre-activation ResNet block to ensure all convolutions are well-conditioned
both across blocks and against the initial and final layers of the
network. We consider the full case now, where the shortcut path may
include a convolution that changes the channel count or the resolution.
Consider a block of the form:
\[
B(x,\alpha,\beta)=\alpha S(x)+\beta F(x)
\]
We consider a block with fan-in $n$ and fan-out $m$. There are two
cases, depending on if the block is a downsampling block or not. In
the case of a downsampling block, a well-scaled shortcut branch consists
of the following sequence of operations:
\begin{align*}
y_{0} & =\text{AvgPool2D}(x,\text{kernel\_size=2},\text{stride=}2),\\
y_{1} & =y_{0}+b_{0},\\
y_{2} & =C(y_{1},\text{op=}m,\text{ks=}1,c=\text{\ensuremath{\alpha}/4}),\\
y_{3} & =y_{2}/\sqrt{\alpha/4}.
\end{align*}
In our notation, $C$ is initialized with the geometric initialization
scheme of Equation \ref{eq:geom-init} using numerator $c=\alpha$.
Here $op$ is output planes and $ks$ is the kernel size. The constant
$4$ corrects for the downsampling, and the constant $\alpha$ is
used to correct the scaling factor of the convolution as described
above. In the non-downsampled case, this simplifies to
\begin{align*}
y_{0} & =x+b_{0},\\
y_{1} & =C(y_{0},\text{op=}m,\text{ks=}1,c=\text{\ensuremath{\alpha}}),\\
y_{2} & =y_{1}/\sqrt{\alpha}.
\end{align*}
For the main branch of a bottlenecked residual block in a pre-activation
network, the sequence begins with a single scaling operation $x_{0}=\sqrt{\beta}x$,
the following pattern is used, with $w$ being inner bottleneck width.
\begin{align*}
x_{1} & =\text{ReLU}(x_{0})\\
x_{2} & =\sqrt{\frac{2}{\beta}}x_{1}\\
x_{3} & =x_{2}+b_{1}\\
x_{4} & =C(x_{3},\text{op=}w,\text{ks=}1,c=\text{\ensuremath{\beta}})
\end{align*}
Followed by a 3x3 conv:
\begin{align*}
x_{5} & =\text{ReLU}(x_{4})\\
x_{6} & =\sqrt{\frac{2}{3\beta}}x_{5}\\
x_{7} & =x_{6}+b_{2}\\
x_{8} & =C(x_{7},\text{op=}w,\text{ks=}3,c=\text{\ensuremath{\beta}})
\end{align*}
and the final sequence of operations mirrors the initial operation
with a downscaling convolution instead of upscaling. At the end of
the block, a learnable scalar $x_{9}=\frac{v}{\sqrt{\beta}}x_{8}$
with $v=\sqrt{\beta}$ is included following the approach of \citet{rescalenet},
and a fixed scalar corrects for any increases in the forward second
moment from the entire sequence, in this case, $x_{9}=\sqrt{\frac{m}{\beta n}}x_{8}$.
This scaling is derived from Equation \ref{eq:n_change} ($\beta$
here undoes the initial beta from the first step in the block).

\subsection{Experimental results}

We ran a series of experiments on an unnormalized pre-activation ResNet-50
architecture using our geometric initialization and scaling scheme
both within and outside of the blocks. We compared against the RescaleNet
unnormalized ResNet-50 architecture. Following their guidelines, we
added dropout which is necessary for good performance and used the
same $\alpha/\beta$ scheme that they used. Our implementation is
available in the supplementary material. We performed our experiments
on the ImageNet dataset \citep{imagenet}, using standard data preprocessing
pipelines and hyper-parameters. In particular, we use batch-size 256,
decay 0.0001, momentum 0.9, and learning rate 0.1 with SGD, using
a 30-60-90 decreasing scheme for 90 epochs. Following our recommendation
in Section \ref{subsec:output_norm}, we performed a sweep on the
output scaling factor and found that a 0.05 final scalar gives the
best results. Across 5 seeds, our approach achieved a test set accuracy
of \textbf{76.18 (SE 0.04}), which matches the performance of the
RescaleNet within our test framework of 76.13 (SE 0.03). Our approach
supersedes the ``fixed residual scaling'' that they propose as a
way of balancing the contributions of each block.

\section{Related Work}

Our approach of balancing the diagonal blocks of the Gauss-Newton
matrix has close ties to a large literature studying the input-output
Jacobian of neural networks. The Jacobian is the focus of study in
a number of ways. The singular values of the Jacobian are the focus
of theoretical study in \citet{nonlineardyn2014,dynamicalisometry2018},
where it's shown that orthogonal initializations better control the
spread of the spectrum compared to Gaussian initializations. \citet{hanin2020,hanin2020nonasymptotic}
also study the effect of layer width and depth on the spectrum. Regularization
of the jacobian, where additional terms are added to the loss to minimize
the Frobenius norm of the Jacobian, can be seen as another way to
control the spectrum \citep{hoffman2019robust,varga2018gradient},
as the Frobenius norm is the sum of the squared singular values. The
spectrum of the Jacobian captures the sensitivity of a network to
input perturbations and is key to the understanding of adversarial
machine learning, including generative modeling \citep{Nie2019UAI}
and robustness \citep{jacadv2018,jacreg2020}. 

\section{Conclusion}

Although not a panacea, by using the scaling principle we have introduced,
neural networks can be designed with a reasonable expectation that
they will be optimizable by stochastic gradient methods, minimizing
the amount of guess-and-check neural network design. Our approach
is a step towards ``engineering'' neural networks, where aspects
of the behavior of a network can be studied in an off-line fashion
before use, rather than by a guess-implement-test-and-repeat experimental
loop.

\bibliographystyle{icml2021}
\bibliography{scaling_calculus}
\cleardoublepage{}

\appendix

\section{Forward and backward second moments}

\label{sec:forward-backward}
\begin{figure*}
\centering{}\includegraphics[width=1\textwidth]{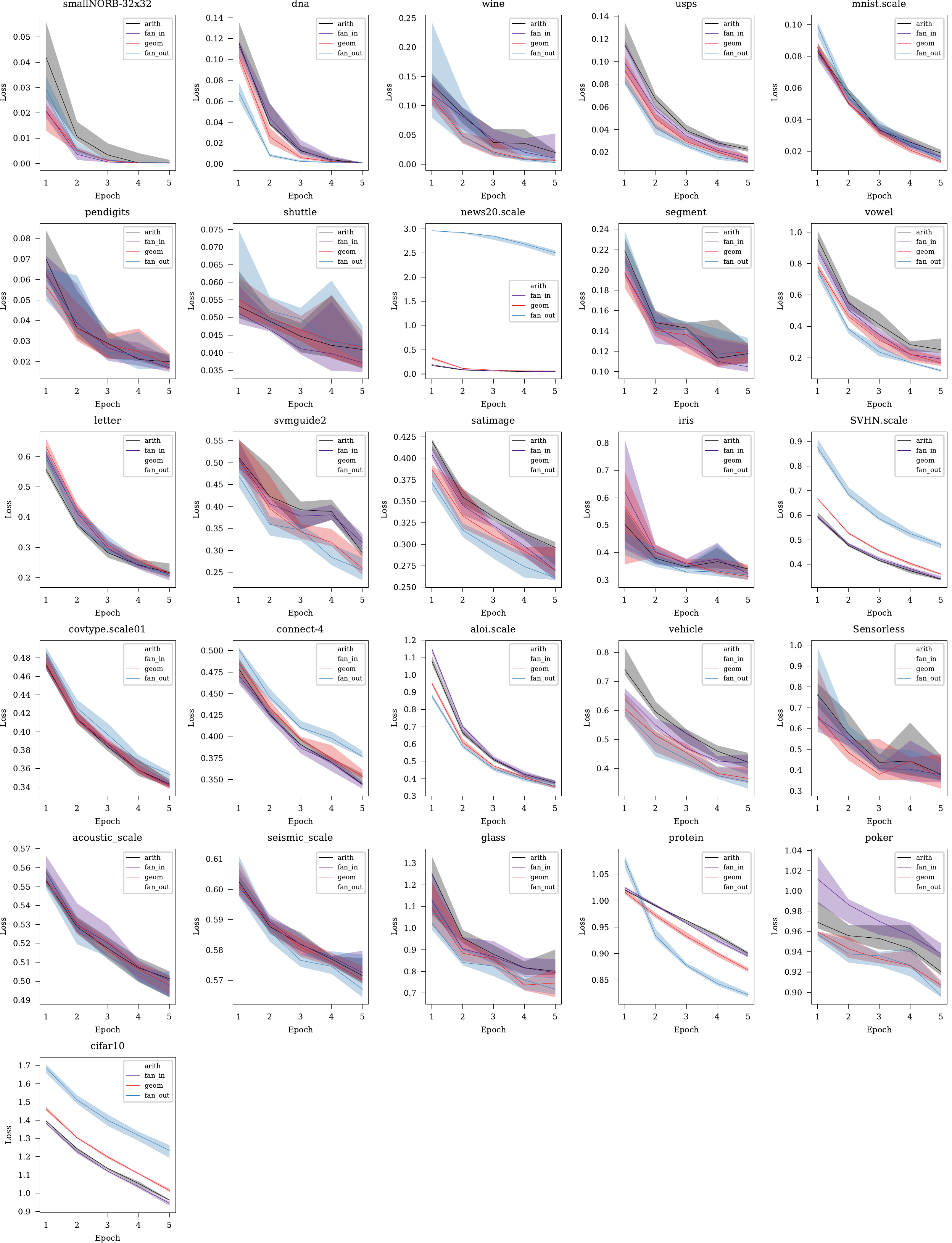}\caption{\label{fig:full-results-1}Full results for the 26 datasets. Averaging
over a large number of datasets is necessary to reduce the high variability
of the results.}
\end{figure*}

We make heavy use of the equations for forward propagation and backward
propagation of second moments, under the assumption that the weights
are uncorrelated to the activations or gradients. For a convolution
\[
y=C_{W}(x),
\]
with input channels $n_{l}$, output channels $n_{l+1},$ and square
$k\times k$ kernels, these formulas are (recall our notation for
the second moments is element-wise for vectors and matrices):

\begin{gather}
E[x_{l+1}^{2}]=n_{l}k_{l}^{2}E[W^{2}]E[x_{l}^{2}],\label{eq:conv_forward}
\end{gather}

\begin{equation}
E[\Delta x_{l+1}^{2}]=\frac{\rho_{l}^{2}E[\Delta x^{2}]}{\rho_{l+1}^{2}n_{l+1}k_{l}^{2}E[W^{2}]}.\label{eq:conv_backward}
\end{equation}

\section{Scaling properties of common operations}

Recall the scaling factor $\varsigma$:
\[
\varsigma_{l}=n_{l}\rho_{l}^{2}E[\Delta x_{l}^{2}]E[x_{l}^{2}];
\]
we show in this section how common neural network building blocks
effect this factor.

\subsection{Convolutions}

Recall the rules for forward and back-propagated second moments for
randomly initialized convolutional layers:

\begin{gather*}
E[x_{l+1}^{2}]=n_{l}k_{l}^{2}E[W^{2}]E[x_{l}^{2}],
\end{gather*}

\[
E[\Delta x_{l+1}^{2}]=\frac{\rho_{l}^{2}E[\Delta x^{2}]}{\rho_{l+1}^{2}n_{l+1}k_{l}^{2}E[W^{2}]}.
\]
These relations require that $W$ be initialized with a symmetric
mean zero distribution. When applied to the scaling factor we see
that:

\begin{align*}
\varsigma_{l+1} & =n_{l+1}\rho_{l+1}^{2}E[\Delta x_{l+1}^{2}]E[x_{l+1}^{2}]\\
 & =n_{l+1}\rho_{l+1}^{2}\frac{\rho_{l}^{2}E[\Delta x^{2}]}{\rho_{l+1}^{2}n_{l+1}k^{2}E[W^{2}]}n_{l}k^{2}E[W^{2}]E[x_{l}^{2}]\\
 & =n_{l}\rho_{l}^{2}E[\Delta x^{2}]E[x_{l}^{2}]\\
 & =\varsigma_{l}.
\end{align*}

\subsection{Linear layers}

These are a special case of convolutions with $k=1$ and $\rho=1$.

\subsection{Averaging pooling}

If the kernel size is equal to the stride, then element-wise we have:
\[
\Delta x_{l}=\frac{1}{k^{2}}\Delta x_{l+1},
\]

So:
\[
E\left[\Delta x_{l}^{2}\right]=\frac{1}{k^{4}}E\left[\Delta x_{l+1}^{2}\right].
\]

\subsection{ReLU}

If the input to a ReLU is centered and symmetrically distributed,
then during the forward pass, half of the inputs are zeroed out in
expectation, meaning that $E[x_{l+1}^{2}]=\frac{1}{2}E[x_{l}^{2}]$.
The backward operation just multiplies $\Delta x_{l+1}$ by the zero
pattern used during the forward pass, so it also zeros half of the
entries, giving $E[\Delta x_{l}^{2}]=\frac{1}{2}E[\Delta x_{l+1}^{2}]$
So:
\begin{align*}
\varsigma_{l+1} & =n_{l+1}\rho_{l+1}^{2}E[\Delta x_{l+1}^{2}]E[x_{l+1}^{2}]\\
 & =n_{l}\rho_{l}^{2}E[\Delta x_{l+1}^{2}]E[x_{l+1}^{2}]\\
 & =n_{l}\rho_{l}^{2}2E[\Delta x_{l}^{2}]\frac{1}{2}E[x_{l}^{2}]\\
 & =\varsigma_{l}.
\end{align*}

\subsection{Dropout}

The reasoning for dropout is essentially the same as for the ReLU.
If nodes are dropped out with probability $p$, then $E[x_{l+1}^{2}]=(1-p)E[x_{l}^{2}]$
and $E[\Delta x_{l}^{2}]=\left(1-p\right)E[\Delta x_{l+1}^{2}]$.
So scaling is maintained. Note that in the PyTorch implementation,
during training the outputs are further multiplied by $1/(1-p)$.

\subsection{Scalar multipliers}

Consider a layer:
\[
x_{l+1}=u_{l}x_{l}.
\]
Then
\[
E[x_{l+1}^{2}]=u_{l}^{2}E[x_{l}^{2}],
\]
and
\[
E[\Delta x_{l}^{2}]=u_{l}^{2}E[\Delta x_{l}^{2}],
\]
so the forward and backward signals are multiplied by $u_{l}^{2}$,
which maintains scaling.

\subsection{Residual blocks}

Consider a residual block of the form:
\[
x_{l+1}=x_{l}+R(x_{l}),
\]
for some operation $R$. Suppose that $E[R(x)^{2}]=s^{2}E[x^{2}]$
for some constant $s$. The forward signal second moment gets multiplied
by $\left(1+s^{2}\right)$ after the residual block:
\[
E[x_{l+1}^{2}]=E[x_{l}^{2}]\left(1+s^{2}\right).
\]
The backwards signal second moment is also multiplied by $s$:
\[
E[\Delta x_{l}^{2}]=\left(1+s^{2}\right)E[\Delta x_{l+1}^{2}].
\]
So:
\begin{align*}
\varsigma_{l+1} & =E[\Delta x_{l+1}^{2}]E[x_{l+1}^{2}]\\
 & =\frac{1}{\left(1+s^{2}\right)}E[\Delta x_{l}^{2}]E[x_{l}^{2}]\left(1+s^{2}\right)\\
 & =\varsigma_{l}.
\end{align*}

\section{Extrinsic and intrinsic form equivalence}

Recall the extrinsic definition of $\gamma_{l}$:
\[
\varsigma_{l}=n_{l}\rho_{l}^{2}E[\Delta x_{l}^{2}]E[x_{l}^{2}].
\]
We rewrite this as:
\[
n_{l}\rho_{l}^{2}E[x_{l}^{2}]=\frac{\varsigma_{l}}{E[\Delta x_{l}^{2}]}.
\]
Then by using the forward and backward relations Equation \ref{eq:conv_forward}
and Equation \ref{eq:conv_backward}:
\begin{align*}
\gamma_{l} & =n_{l}^{\text{in}}k_{l}^{2}\rho_{l+1}^{2}E\left[x_{l}^{2}\right]^{2}\frac{E[\Delta x_{l+1}^{2}]}{E[x_{l+1}^{2}]}\\
 & =k_{l}^{2}E\left[x_{l}^{2}\right]\frac{\rho_{l+1}^{2}\varsigma_{l}E[\Delta x_{l+1}^{2}]}{\rho_{l}^{2}E[\Delta x_{l}^{2}]E[x_{l+1}^{2}]}\quad\text{(\ensuremath{\varsigma} substitution)}\\
 & =k_{l}^{2}E\left[x_{l}^{2}\right]\frac{\rho_{l+1}^{2}\varsigma_{l}E[\Delta x_{l+1}^{2}]}{\rho_{l}^{2}E[\Delta x_{l}^{2}]n_{l}k_{l}^{2}E[W_{l}^{2}]E[x_{l}^{2}]}\quad\text{(forward)}\\
 & =E[\Delta x_{l+1}^{2}]\cdot\frac{\rho_{l+1}^{2}\varsigma_{l}}{\rho_{l}^{2}E[\Delta x_{l}^{2}]n_{l}E[W_{l}^{2}]}\\
 & =\frac{\rho_{l}^{2}E[\Delta x_{l}^{2}]}{\rho_{l+1}^{2}n_{l+1}k^{2}E[W_{l}^{2}]}\cdot\frac{\rho_{l+1}^{2}\varsigma_{l}}{\rho_{l}^{2}E[\Delta x_{l}^{2}]n_{l}E[W_{l}^{2}]}\quad\text{(back)}\\
 & =\frac{\varsigma_{l}}{n_{l+1}n_{l}k^{2}E[W_{l}^{2}]^{2}}.
\end{align*}

\section{The Weight gradient ratio is equal to GR scaling for MLP models}
\begin{prop}
\label{sec:variance-ratio} The weight-gradient ratio $\nu_{l}$ is
equal to the scaling $\gamma_{l}$ factor under the assumptions of
Section \ref{sec:assumptions} \@.
\end{prop}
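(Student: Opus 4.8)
The plan is to evaluate $E[\Delta W_l^2]$ directly from the algebraic form of the weight gradient of a linear layer, and then collapse the ratio $\nu_l = E[\Delta W_l^2]/E[W_l^2]$ onto the intrinsic definition of $\gamma_l$ using the forward/backward second-moment relations of Appendix \ref{sec:forward-backward}. A linear layer is the $k_l = 1$, $\rho_l = 1$ special case of a convolution, so $x_{l+1} = W_l x_l$ and the loss gradient with respect to the weight matrix is the outer product $\Delta W_l = \Delta x_{l+1}\, x_l^{\top}$; entrywise, $\Delta W_{l,ij} = \Delta x_{l+1,i}\, x_{l,j}$.

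First I would take second moments entrywise. By the assumptions of Section \ref{sec:assumptions} — the activations $x_l$ and the back-propagated gradients $\Delta x_{l+1}$ are iid and uncorrelated, and the weights are independent of both — the entrywise second moment factors,
\begin{equation*}
E[\Delta W_l^2] = E[\Delta x_{l+1}^2]\, E[x_l^2],
\end{equation*}
so that $\nu_l = E[\Delta x_{l+1}^2]\, E[x_l^2]/E[W_l^2]$. Next I would eliminate $E[\Delta x_{l+1}^2]$ in favour of $E[\Delta x_l^2]$ using the backward relation (Equation \ref{eq:conv_backward} with $k_l = 1$, $\rho_l = 1$), namely $E[\Delta x_{l+1}^2] = E[\Delta x_l^2]/(n_{l+1} E[W_l^2])$, which gives
\begin{equation*}
\nu_l = \frac{E[\Delta x_l^2]\, E[x_l^2]}{n_{l+1} E[W_l^2]^2}.
\end{equation*}

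Finally, recall that for a linear layer $\varsigma_l = n_l E[\Delta x_l^2] E[x_l^2]$ and $\gamma_l = \varsigma_l/(n_{l+1} n_l E[W_l^2]^2)$ by Equation \ref{eq:sf-intrinsic}; substituting $\varsigma_l$ shows $\gamma_l$ equals exactly the displayed expression for $\nu_l$, proving the proposition. As a consistency check one can instead eliminate $E[x_l^2]$ via the forward relation $E[x_{l+1}^2] = n_l E[W_l^2] E[x_l^2]$ (Equation \ref{eq:conv_forward}) and match against the extrinsic form of $\gamma_l$; both routes land on the same formula.

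I expect the one genuinely substantive step to be the factorization $E[\Delta W_l^2] = E[\Delta x_{l+1}^2]\,E[x_l^2]$: it is where the modelling assumptions of Section \ref{sec:assumptions} are actually used, and it requires being careful that it is the gradient $\Delta x_{l+1}$ at the \emph{output} side of the layer — not $\Delta x_l$ — that appears in the outer product $\Delta W_l = \Delta x_{l+1} x_l^{\top}$, and that the uncorrelatedness assumption is applied at the level of the squared quantities so that the expectation of the product splits. Everything after that is routine bookkeeping with the two second-moment propagation identities.
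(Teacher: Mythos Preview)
Your argument is correct and follows essentially the same idea as the paper: compute $E[\Delta W_l^2]$ from the outer-product structure of the weight gradient, divide by $E[W_l^2]$, and simplify using one of the second-moment propagation identities. The only notable differences are cosmetic. First, the paper carries out the computation for a general convolution (keeping the $k_l$ and $\rho_l$ factors, using $E[\Delta W_l^2]=\rho_l^2 E[x_l^2]E[\Delta x_{l+1}^2]$), whereas you specialize to the linear case $k_l=\rho_l=1$; your derivation extends verbatim once that more general weight-gradient formula is plugged in. Second, the paper eliminates $E[W_l^2]$ via the \emph{forward} relation and lands on the \emph{extrinsic} expression for $\gamma_l$, while you eliminate $E[\Delta x_{l+1}^2]$ via the \emph{backward} relation and land on the \emph{intrinsic} one---exactly the alternative route you flag as a consistency check.
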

\begin{proof}
First, we rewrite Equation \ref{eq:conv_forward} to express $E[W^{2}]$
in terms of forward moments: 
\[
E[x_{l+1}^{2}]=n_{l}k_{l}^{2}E[W^{2}]E[x_{l}^{2}],
\]
\[
\therefore E[W^{2}]=\frac{E[x_{l+1}^{2}]}{n_{l}k_{l}^{2}E[x_{l}^{2}]}.
\]
For the gradient w.r.t to weights of a convolutional layer, we have:
\[
E[\Delta W_{l}^{2}]=\rho_{l}^{2}E[x_{l}^{2}]E[\Delta x_{l+1}^{2}].
\]
Therefore:
\begin{align*}
\nu_{l} & =\frac{E[\Delta W_{l}^{2}]}{E[W_{l}^{2}]}\\
 & =n_{l}^{\text{in}}k_{l}^{2}E\left[x_{l}^{2}\right]^{2}\frac{E[\Delta x_{l+1}^{2}]}{E[x_{l+1}^{2}]}\\
 & =\gamma_{l}.
\end{align*}
\end{proof}

\section{Scaling of scalar multipliers}

Consider the layer:
\[
x_{l+1}=u_{l}x_{l}
\]
with a single learnable scalar $u_{l}$. Using :
\[
E[x_{l+1}^{2}]=E\left[u_{l}^{2}\right]E[x_{l}^{2}],
\]
we have:
\[
E\left[u_{l}^{2}\right]=\frac{E[x_{l+1}^{2}]}{E[x_{l}^{2}]}.
\]
Likewise from the equations of back-prop we have:
\[
\Delta u=\sum_{c}^{n_{l}}\sum_{i}^{\rho_{l}}\sum_{j}^{\rho_{l}}\Delta x_{l+1,c,i,j}x_{l,c,i,j},
\]
so
\[
E[\Delta u^{2}]=n_{l}\rho_{l}^{2}E[\Delta x_{l+1}^{2}]E[x_{l}^{2}].
\]
Therefore:
\begin{equation}
\nu_{l}=\frac{E[\Delta u_{l}^{2}]}{E[u_{l}^{2}]}=n_{l}\rho_{l}^{2}E[x_{l}^{2}]^{2}\frac{E[\Delta x_{l+1}^{2}]}{E[x_{l+1}^{2}]}.\label{eq:u-scaling}
\end{equation}
This equation is the same for a 1x1 convolutional layer. We can write
the scaling factor in terms of the weight $u$ more directly, by rearranging
the scaling rule as:
\[
\frac{\varsigma_{l}}{E[\Delta x_{l}^{2}]E[x_{l}^{2}]}=n_{l}\rho_{l}^{2},
\]
and substituting it into Equation \ref{eq:u-scaling}: 
\begin{align*}
n_{l}\rho_{l}^{2}E[x_{l}^{2}]^{2}\frac{E[\Delta x_{l+1}^{2}]}{E[x_{l+1}^{2}]} & =\frac{\varsigma_{l}}{E[\Delta x_{l}^{2}]E[x_{l}^{2}]}E[x_{l}^{2}]^{2}\frac{E[\Delta x_{l+1}^{2}]}{E[x_{l+1}^{2}]}\\
 & =\frac{\varsigma_{l}}{E[\Delta x_{l}^{2}]}E[x_{l}^{2}]\frac{E[\Delta x_{l+1}^{2}]}{E[x_{l+1}^{2}]}\\
 & =\frac{\varsigma_{l}}{E[\Delta x_{l}^{2}]}E[x_{l}^{2}]\frac{E[\Delta x_{l}^{2}]}{E[u_{l}^{2}]^{2}E[x_{l}^{2}]}\\
 & =\frac{\varsigma_{l}}{E[u_{l}^{2}]^{2}}.
\end{align*}

To ensure that the scaling factor matches that of convolutions used
in the network, suppose that geometric initialization is used with
global constant $c$, then each convolution has $\gamma_{l}=\varsigma_{l}/c^{2}$,
so we need:

\[
E[u_{l}^{2}]=c,
\]
or just $u_{l}=\sqrt{c}$. Notice that if we had used a per-channel
scalar instead, then the layer scaling would not match the scaling
of convolution weights, which would result in uneven layer scaling
in the network. This motivates using scalar rather than channel-wise
scaling factors. 

\section{The Gauss-Newton matrix}

\label{subsec:gauss-newton}Standard ReLU classification and regression
networks have a particularly simple structure for the Hessian with
respect to the input, as the network's output is a piecewise-linear
function $g$ feed into a final layer consisting of a convex log-softmax
operation, or a least-squares loss. This structure results in the
Hessian with respect to the input being equivalent to its \emph{Gauss-Newton}
approximation. The Gauss-Newton matrix can be written in a factored
form, which is used in the analysis we perform in this work. We emphasize
that this is just used as a convenience when working with diagonal
blocks, the GN representation is not an approximation in this case.

The (Generalized) Gauss-Newton matrix $G$ is a positive semi-definite
approximation of the Hessian of a non-convex function $f$, given
by factoring $f$ into the composition of two functions $f(x)=h(g(x))$
where $h$ is convex, and $g$ is approximated by its Jacobian matrix
$J$ at $x$, for the purpose of computing $G$:
\[
G=J^{T}\left(\nabla^{2}h(g(x))\right)J.
\]
The GN matrix also has close ties to the Fisher information matrix
\citep{martens-insights}, providing another justification for its
use.

Surprisingly, the Gauss-Newton decomposition can be used to compute
diagonal blocks of the Hessian with respect to the weights $W_{l}$
as well as the inputs \citep{martens-insights}. To see this, note
that for any activation $y_{l}$, the layers above may be treated
in a combined fashion as the $h$ in a $f(W_{l})=h(g(W_{l}))$ decomposition
of the network structure, as they are the composition of a (locally)
linear function and a convex function and thus convex. In this decomposition
$g(W_{l})=W_{l}x_{l}+b_{l}$ is a function of $W_{l}$ with $x_{l}$
fixed, and as this is linear in $W_{l}$, the Gauss-Newton approximation
to the block is thus not an approximation.

\section{GR scaling derivation\label{sec:GR-scaling-derivation}}

Our quantity of interest is the average squared singular value of
$G_{l}$, which is simply equal to the (element-wise) non-central
second moment of the product of $G$ with a i.i.d normal random vector
$r$:
\[
E[\left(G_{l}r\right)^{2}]=E[\left(J_{l}^{T}R_{l}J_{l}r\right)^{2}].
\]

Recall that our notation $E[X^{2}]$ refers to the element-wise non-central
second moment of the vector. To compute the second moment of the elements
of $G_{l}r$, we can calculate the second moment of matrix-random-vector
products against $J_{l}$, $R_{l}$ and $J_{l}^{T}$ separately since
$R$ is uncorrelated with $J_{l}$, and the back-propagated gradient
$\Delta y_{l}$ is uncorrelated with $y_{l}$ (Assumption A3). 

\subsection*{Jacobian products $J_{l}$ and $J_{l}^{T}$}

Note that each row of $J_{l}$ has $n_{l}^{\text{in}}$ non-zero elements,
each containing a value from $x_{l}$. This structure can be written
as a block matrix,
\begin{equation}
J_{l}=\left[\begin{array}{ccc}
x_{l} & 0 & 0\\
0 & x_{l} & 0\\
0 & 0 & \ddots
\end{array}\right],\label{eq:j-matrix}
\end{equation}
Where each $x_{l}$ is a $1\times n_{l}^{\text{in}}$ row vector.
This can also be written as a Kronecker product with an identity matrix
as $I_{n_{l}^{\text{out}}}\otimes x_{l}$. The value $x_{l}$ is i.i.d
random at the bottom layer of the network. For layers further up,
the multiplication by a random weight matrix from the previous layer
ensures that the entries of $x_{l}$ are identically distributed.
So we have:
\begin{equation}
E\left[\left(J_{l}r\right)^{2}\right]=n_{l}^{\text{in}}E[r^{2}]E[x_{l}^{2}]=n_{l}^{\text{in}}E[x_{l}^{2}].\label{eq:j-eq}
\end{equation}
Note that we didn't assume that the input $x_{l}$ is mean zero, so
$Var[x_{l}]\neq E[x_{l}^{2}].$ This is needed as often the input
to a layer is the output from a ReLU operation, which will not be
mean zero.

For the transposed case, we have a single entry per column, so when
multiplying by an i.i.d random vector $u$ we have: 
\begin{equation}
E\left[\left(J_{l}^{T}u\right)^{2}\right]=E[u^{2}]E[x_{l}^{2}].\label{eq:jt-eq}
\end{equation}

\subsection*{Upper Hessian $R_{l}$ product}

Instead of using $R_{l}u$, for any arbitrary random $u$, we will
instead compute it for $u=y_{l}/E[y_{l}^{2}]$, it will have the same
expectation since both $J_{l}r$ and $y_{l}$ are uncorrelated with
$R_{l}$. The piecewise linear structure of the network above $y_{l}$
with respect to the $y_{l}$ makes the structure of $R_{l}$ particularly
simple. It is a least-squares problem $g(y_{l})=\frac{1}{2}\left\Vert \Phi y_{l}-t\right\Vert ^{2}$
for some $\Phi$ that is the linearization of the remainder of the
network. The gradient is $\Delta y=\Phi^{T}\left(\Phi y-t\right)$
and the Hessian is simply $R=\Phi^{T}\Phi$. So we have that
\begin{align*}
E\left[\Delta y_{l}^{2}\right] & =E\left[\frac{1}{n_{l}^{\text{out}}}\left\Vert \Phi^{T}\left(\Phi y-t\right)\right\Vert ^{2}\right]\\
 & =E\left[\frac{1}{n_{l}^{\text{out}}}\left\Vert \Phi^{T}\Phi y\right\Vert ^{2}\right]+E\left[\frac{1}{n_{l}^{\text{out}}}\left\Vert \Phi^{T}t\right\Vert ^{2}\right]\\
 & =E\left[\frac{1}{n_{l}^{\text{out}}}\left\Vert \Phi^{T}\Phi y\right\Vert ^{2}\right]+O\left(\frac{1}{n_{l}^{\text{out}}}\right).\\
 & =E\left[\left(R_{l}y_{l}\right)^{2}\right]+O\left(\frac{1}{n_{l}^{\text{out}}}\right).
\end{align*}
Applying this gives:
\begin{align}
E\left(R_{l}u\right)^{2} & =E[u^{2}]E[\left(R_{l}y_{l}\right)^{2}]/E[y_{l}^{2}]\label{eq:r-eq}\\
 & =E[u^{2}]E[\Delta y_{l}^{2}]/E[y_{l}^{2}]+O\left(\frac{E[u^{2}]}{n_{l}^{\text{out}}E[y_{l}^{2}]}\right)
\end{align}

\subsection*{Combining}

To compute $E[\left(G_{l}r\right)^{2}]=E[\left(J_{l}^{T}R_{l}J_{l}r\right)^{2}]$
we then combine the simplifications from Equations \ref{eq:j-eq},
\ref{eq:jt-eq}, and \ref{eq:r-eq} to give:
\[
E[\left(G_{l}r\right)^{2}]=n_{l}^{\text{in}}E\left[x_{l}^{2}\right]^{2}\frac{E[\Delta y_{l}^{2}]}{E[y_{l}^{2}]}+O\left(\frac{n_{l}^{\text{in}}E\left[x_{l}^{2}\right]^{2}}{n_{l}^{\text{out}}E[y_{l}^{2}]}\right).
\]

\section{Details of LibSVM dataset input/output scaling }

To prevent the results from being skewed by the number of classes
and the number of inputs affecting the output variance, the logit
output of the network was scaled to have standard deviation 0.05 after
the first minibatch evaluation for every method, with the scaling
constant fixed thereafter. LayerNorm was used on the input to whiten
the data. Weight decay of 0.00001 was used for every dataset. To aggregate
the losses across datasets we divided by the worst loss across the
initializations before averaging.
\end{document}